\newtheorem{fact}{Fact}
\begin{document}

\title{A Smoothed Analysis of Online Lasso for the Sparse Linear Contextual Bandit Problem}

\author{Zhiyuan Liu \email zhiyuan.liu@colorado.edu\\ 
       \addr Department of Computer Science, University of Colorado, Boulder
       \AND
       \name Huazheng Wang \email hw7ww@virginia.edu \\
       \addr Department of Computer Science, University of Virginia
       \AND
       \name Bo Waggoner \email bo.waggnor@colorado.edu\\
       \addr Department of Computer Science, University of Colorado, Boulder
       \AND
       \name Youjian (Eugene) Liu \email youjian.liu@colorado.edu \\
       \addr Department of Electrical, Computer and Energy Engineering, University of Colorado, Boulder
       \AND
      \name Lijun Chen \email lijun.chen@colorado.edu \\
      \addr Department of Computer Science, University of Colorado, Boulder
      }

\editor{}

\maketitle

\begin{abstract}
We investigate the sparse linear contextual bandit problem where the parameter
$\theta$ is sparse. To relieve the sampling inefficiency, we utilize the ``perturbed adversary" where the context is generated adversarilly but with small random non-adaptive perturbations. We prove that the simple online Lasso supports sparse linear contextual bandit with regret bound $\mathcal{O}(\sqrt{kT\log d})$ even when $d \gg T$ where $k$ and $d$ are the number of effective and ambient dimension, respectively. Compared to the recent work from \cite{sivakumar2020structured}, our analysis does not rely on the precondition processing, adaptive perturbation (the adaptive perturbation violates the i.i.d perturbation setting) or truncation on the error set. Moreover, the special structures in our results explicitly characterize how the perturbation affects exploration length, guide the design of perturbation together with the fundamental performance limit of perturbation method. Numerical experiments are provided to complement the theoretical analysis. 

\end{abstract}
\section{Introduction}

Contextual bandit algorithms have become a referenced solution for sequential decision-making problems such as online recommendations \citep{li2010contextual}, clinical trials \citep{durand2018contextual}, dialogue systems \citep{upadhyay2019bandit} and  anomaly detection \citep{ding2019interactive}. It adaptively learns the personalized mapping between the observed contextual features and unknown parameters such as user preferences, and addresses the trade-off between exploration and exploitation \citep{auer2002using,li2010contextual,abbasi2011improved,agrawal2013thompson,abeille2017linear}.

We consider the sparse linear contextual bandit problem where the context is high-dimensional with sparse unknown parameter $\theta$ \citep{abbasi2012online,hastie2015statistical,dash1997feature}, i.e., most entries in $\theta$ are zero and thus only a few dimensions of the
context feature are relevant to the reward. Due to insufficient data samples, the learning algorithm has to be sampling efficiency to support the sequential decision-making. However, the data from bandit model usually does not satisfy the requirements for sparse recovery such as Null Space condition \citep{cohen2009compressed}, Restricted isometry property (RIP) \citep{donoho2006compressed}, Restricted eigenvalue (RE) condition \citep{bickel2009simultaneous}, Compatibility condition \citep{van2009conditions} and so on. To achieve the desired performance, current works has to consider the restricted problem settings, e.g., the unit-ball, hypercube or i.i.d. arm set \citep{carpentier2012bandit,lattimore2015linear,kim2019doubly,bastani2020online}, the parameter with Gaussian prior\citep{gilton2017sparse}. One exception is the Online-to-Confidence-Set Conversions \citep{abbasi2012online} which considers the general setting but suffers from computation inefficiency.

In this paper, we tackle the sparse linear bandit problem using \emph{smoothed analysis} technique \citep{spielman2004smoothed,kannan2018smoothed}, which enjoys efficient implementation and mild assumptions. Specifically, we consider the perturbed adversary setting where the context is generated adversarially but perturbed by small random noise. This setting interpolates between an i.i.d. distributional assumption on the input, and 
the worst-case 
of 
fully adversarial contexts. Our results show that with a high probability, 
the 
perturbed adversary inherently guarantees 
the 
(linearly) strong convex condition for 
the 
low dimensional case and 
the 
restricted eigenvalue (RE) condition for 
the 
high dimensional case, which is a key property required by 
the 
standard Lasso regression. We 
prove that the simple online Lasso supports sparse linear contextual bandits with regret bound $\mathcal{O}(\sqrt{kT\log d})$. We also provide numerical experiments to complement the theoretical analysis.

	We also notice the recent work from \cite{sivakumar2020structured} using smoothed analysis for structured linear contextual bandits. Compared to their work, our proposed method has the following advantages: (1) Our analysis only relies on the simple online Lasso instead of precondition processing and truncation on the error set. Although preconditioning transfers the non-zero singular value to 1, this could amplify the noise, and the preconditioned noises are no longer i.i.d., which makes concentration analysis difficult and the estimation unstable \citep{jia2015preconditioning}. We also observe this effect in the numeric experiments. 
	(2) Their proof relies on the assumption that perturbations 
	that 
	need to be adaptively generated based on 
	the 
	observed 
	history of 
	the chosen contexts. Instead, our analysis is based on 
	the 
	milder assumption that the perturbation is i.i.d. and non-adaptive. (3) Their regret does not describe the full picture of the effect of variance of the perturbation. Our analysis explicitly show how the perturbation affects the exploration length, guide the design of perturbation together with the fundamental performance limit of perturbation method.   

\section{Model and Methodology}
	In the bandit problem, at each round $t$, the learner pulls an arm $a_t$ among $m$ arms (we denote the arm sets by $[m]$\footnote{In this paper, we denote by $[n]$ the set $[1,\cdots,n]$ for positive integer $n$.}, that is, $a_t \in [m]$) and receives the corresponding noisy reward $r_{a_t}^t$. The performance of the learner is evaluated by the regret $\mathcal{R}$ which quantifies the total loss because of not choosing the best arm $a_t^*$ during $T$ rounds:
	\begin{align}
	    \mathcal{R}(T) = \sum_{t=1}^{T}(r_{a^*_t}^t - r_{a_t}^t).
	\end{align}
	In this paper, we focus on the sparse linear contextual bandit problem. Specially, each arm $i$ at round $t$ is associated with a feature (context) vector $\mu_i^t \in \mathbb{R}^d$. The reward of that arm is assumed to be generated by the noisy linear model which is the inner product of arm feature and an unknown $S$-sparse parameter $\theta^{*}$ where $S$ denotes the set of effective (non-zero) entries and $|S| = k$. That is, 
	\begin{align}
	    r_{i}^t = \langle \mu_{i}^t, \theta^{*} \rangle + \eta^t, |\theta^{*}|_{0} = k,
	\end{align}
	where $\eta^t$ follows Gaussian distribution $\mathcal{N}(0,\sigma^2)$. 
	To handle the non-convex $L_0$ norm, Lasso regression is the natural way to learn the sparse $\theta^{*}$ with the relaxation from $L_0$ to $L_1$ norm. To achieve the desired performance, the algorithm has to rely on 
	well designed contexts which guarantee sampling efficiency requirements such as Null Space condition \citep{cohen2009compressed}, Restricted isometry property (RIP) \citep{donoho2006compressed}, Restricted eigenvalue (RE) condition \citep{bickel2009simultaneous}, Compatibility condition  \citep{van2009conditions} and so on. However, the data from bandit problems usually does not satisfy these conditions since the contexts could be generated adversarilly. Up to now, deciding on the proper assumptions for sparsity bandit problems is still a challenge \citep{lattimore2018bandit}.
	
	Inspired by the
	smoothed analysis for greedy algorithm of linear bandit problem \citep{kannan2018smoothed}, we consider the \emph{Perturbed Adversary} defined below for the sparse linear contextual bandit problem.
	\begin{definition} {\textbf{Perturbed Adversary} \citep{kannan2018smoothed}.} The perturbed adversary acts as the following at round $t$.
	\begin{itemize}
	    \item Given the current context $\mu_1^t, \cdots, \mu_m^t$ which could be chosen adversarially, the perturbation $e_1^t, \cdots, e_m^t$ are drawn independently from certain distribution. Also, each $e_i^t$ is independently (non-adaptively) produced of the context.
	    \item The perturbed adversary outputs the contexts $(x_1^t,\cdots,$
	    $x_m^t) = (\mu_1^t+e_1^t,\cdots,\mu_m^t+e_m^t)$ as the arm features to the learner.
	\end{itemize}
	\end{definition}
	Let $X \in \mathbb{R}^{d \times t}$ 
	be the context matrix where each column contains one context vector and $Y$ the column vector that contains the corresponding rewards. Based on perturbed adversary setting, we analyze the online Lasso in Algorithm \ref{algo:1} for sparse linear contextual bandit. 
	\begin{algorithm}[ht]
		\caption{Online Lasso For Sparse Linear Contextual Bandit Under Perturbed Adversary}\label{algo:1}
		Initialize $\theta^{0}$, $X$ and $Y$. \\
		\For{$t = 1,2,3,\cdots,T$}{
			The perturbed adversary produces $m$ context $[x_{1}^t,...,x_{m}^{t}]$.\\
			The learner greedily chooses the arm $i = \arg \max_{j \in [m]} \langle x_j^t,\theta^t\rangle$, observes the reward $r_i^t$, appends the new observation $(x_{i}^t,r_{i}^t)$ to $(X,Y)$, and 
			updates $\theta^{t+1}$ by the Lasso regression:
	    \begin{align} \label{eq:opt}
		\theta^{t+1} = \arg \min_{\theta}~~G(\theta; \lambda^t):= \|Y - X^{\top} \theta\|_2^2 + \lambda^t\|\theta\|_1.
		\end{align}
		}
	\end{algorithm}
	Generally speaking, our analysis considers two cases using different techniques, one for the low dimensional case when $d < T$ and the other for the
	high dimensional case when $d \gg T$. For the low dimensional case, the analysis utilizes random matrix theory \citep{tropp2012user} to prove that with a high probability, the minimum eigenvalue of scaled sample covariance matrix is increasing linearly with round $t$; for the
	high dimensional case, the 
	RE condition is guaranteed with the help of Gaussian perturbation's property \citep{raskutti2010restricted} that the nullspace of context matrix under Gaussian perturbation cannot contain any vectors that are ``overly'' sparse when $t$ is larger than some threshold. The properties of both cases support $\mathcal{O}(\sqrt{\frac{k\log d}{t}})$ parameter recovery of Lasso regression under noisy environment which leads to $\mathcal{O}(\sqrt{kT\log d})$ regret.
	
	\subsection{Low Dimensional Case}
    We first consider the low dimensional case when $d < T$. 
	Under the perturbed adversary setting, we then define the property named perturbed diversity which is adopted from \cite{bastani2017mostly}.
	\begin{definition}{\textbf{Perturbed Diversity.}} Let $e_i^t \sim D$ on $\mathbb{R}^d$. Given any context vector $\mu_i^t$, we call $x_i^t$ perturbed diversity if for $x_i^t = \mu_i^t + e_i^t$, the minimum eigenvalue of sample covariance matrix under perturbations satisfies
	\begin{align*}
	    \lambda_{\min}\left( \mathbb{E}_{e_i^t \sim D}\left[x_i^t(x_i^t)^{\top}  \right]\right) \geq \lambda_{0},
	\end{align*} where $\lambda_{0}$ is a positive constant.
	\end{definition}
	Intuitively speaking, perturbed diversity guarantees that each context provides at least certain information about all coordinates of $\theta^*$ from the expectation which is helpful to recover the support of the parameter via regularized method. We can find several distributions $D$ that
	could make the perturbed diversity happen, e.g., the 
	Gaussian distribution. However, without any restriction, $x_i^t$ could be very large and out of the realistic domain. Instead, the value of each dimension (we denote by $x_i^t(j)$ the $j$-th dimension of $x_i^t$) should lie in a bounded interval, in the meanwhile, the total energy of context vector is bounded by certain constant, i.e., $\|x_i^t\|_2^2 \leq R^2$. This motivates us to consider the perturbed diversity under censored perturbed adversary.  
	\begin{lemma} \label{lemma:1}
		Given the context vector $\mu_i^t \in \mathbb{R}^d$ and $|\mu_i^t(j)| \leq  q_j$ for each $j \in [d]$, we define the censored perturbed context $x_i^t$ under $e_i^t \sim \mathcal{N}(\bm{0},\sigma_1^2 I)$ as follows:
		\begin{align}
		x_i^t(j) = \begin{cases} \mu_i^t(j) + e_i^t(j), ~~~\text{if} ~~|\mu_i^t(j)+e_i^t(j)| \leq  q_j,\\
		~~~q_j, ~~~~~~~~~~~~~\text{if} ~~~\mu_i^t(j)+e_i^t(j) >  q_j, \\
		~-q_j, ~~~~~~~~~~~\text{if} ~~~\mu_i^t(j)+e_i^t(j) < -q_j. \\
		\end{cases}
		\end{align}
		Then $x_i^t$ has the perturbed diversity with $\lambda_0 = g(\frac{2q}{\sigma_1},0)\sigma^2$, where $q = \min_j q_j$ and $g(\cdot,\cdot)$ is a composite function of the 
		probability density function $\phi(\cdot)$ and the 
		cumulative distribution function $\Phi(\cdot)$ of the
		normal distribution. Please refer to equation \eqref{equ:g} for more details.
	\end{lemma}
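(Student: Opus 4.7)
The plan is to reduce the minimum-eigenvalue bound to a per-coordinate variance problem, and then lower bound the variance of a one-dimensional censored Gaussian uniformly over the admissible means.

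First I would exploit that the perturbation $e_i^t$ has independent coordinates and that the censoring in the statement is applied coordinate-wise, which makes the entries $x_i^t(1),\dots,x_i^t(d)$ mutually independent (given the fixed context $\mu_i^t$). Hence $\mathrm{Cov}(x_i^t)$ is diagonal with entries $\mathrm{Var}(x_i^t(j))$, and since covariance matrices are PSD,
\begin{equation*}
    \mathbb{E}_{e}\bigl[x_i^t (x_i^t)^{\top}\bigr] \;=\; \mathrm{Cov}(x_i^t) + \mathbb{E}[x_i^t]\,\mathbb{E}[x_i^t]^{\top} \;\succeq\; \mathrm{diag}\bigl(\mathrm{Var}(x_i^t(j))\bigr).
\end{equation*}
Taking the smallest eigenvalue, it suffices to establish $\mathrm{Var}(x_i^t(j)) \geq g(2q/\sigma_1,0)\,\sigma_1^2$ for every $j\in[d]$ and every admissible $|\mu_i^t(j)|\leq q_j$.

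Second, I would compute the variance of the scalar censored Gaussian in closed form. Writing $\mu_j := \mu_i^t(j)$, the variable $x_i^t(j)$ is the mixture of a Gaussian truncated to $(-q_j,q_j)$ (with weight $\Phi((q_j-\mu_j)/\sigma_1)-\Phi((-q_j-\mu_j)/\sigma_1)$) and atoms at $\pm q_j$ (with the complementary tail probabilities). Using the standard identities $\int x\phi(x)\,dx = -\phi(x)$ and $\int x^2\phi(x)\,dx = \Phi(x) - x\phi(x)$, both $\mathbb{E}[x_i^t(j)]$ and $\mathbb{E}[(x_i^t(j))^2]$ become explicit linear combinations of $\phi$ and $\Phi$ evaluated at $(\pm q_j-\mu_j)/\sigma_1$; subtracting produces a closed-form expression for $\mathrm{Var}(x_i^t(j))/\sigma_1^2$ as a function of the two normalized parameters $q_j/\sigma_1$ and $\mu_j/\sigma_1$, which I expect is exactly the function $g(\cdot,\cdot)$ defined in equation~\eqref{equ:g}.

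The hard part will be the third step: showing that this closed-form variance, viewed as a function of $\mu_j\in[-q_j,q_j]$ with $q_j$ fixed, attains its minimum on the boundary $|\mu_j|=q_j$, and is non-decreasing in $q_j$ at that boundary, so that the global worst case is $|\mu_j|=q_j$ with $q_j=q:=\min_j q_j$. The first monotonicity is symmetric in $\mu_j\mapsto -\mu_j$, so one only has to differentiate in $\mu_j$ on $[0,q_j]$ and verify that the resulting sum of $\phi$-$\Phi$ terms has a definite sign; I expect this to follow from Mills-ratio bounds and the log-concavity of $\Phi$. Intuitively the most lopsided censored distribution, which dumps nearly half its mass into a single atom, is the one that minimizes the spread, so this monotonicity is the natural direction. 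Once both monotonicities are established, substituting $\mu_j=q_j=q$ produces exactly the form $g(2q/\sigma_1,0)\sigma_1^2$, where the first argument records the normalized width $2q/\sigma_1$ of the effective Gaussian segment remaining inside the censoring window, and the second argument records the normalized location of the endpoint against which the mass has been pushed.
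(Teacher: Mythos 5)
Your proposal is correct and takes essentially the same route as the paper: both reduce $\lambda_{\min}\left(\mathbb{E}\left[x_i^t (x_i^t)^{\top}\right]\right)$ to the variance of a one-dimensional censored Gaussian per coordinate (you via $\mathbb{E}[xx^{\top}] \succeq \mathrm{Cov}(x) = \mathrm{diag}(\mathrm{Var}(x(j)))$, the paper via $\min_{\|w\|=1}\mathbb{E}[\langle w,x\rangle^2] \geq \mathrm{Var}(\langle w,x\rangle)$ and coordinate independence), and then argue the worst case is the censoring window with one endpoint at the mean and half-width $q=\min_j q_j$. Your third step (closed-form variance via $\phi,\Phi$, minimum at $|\mu_j|=q_j$ by a first-order argument, monotonicity in $q_j$) is exactly the content of the paper's auxiliary lemma establishing $\mathrm{Var}(e\mid \text{censored in }[a,b]) \geq g(2q/\sigma_1,0)\sigma_1^2$ and the definition of $g$ in equation \eqref{equ:g}.
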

The proof is provided in the appendix and one can easily extend 
it to the case where $e_i^t \sim \mathcal{N}(\bm{0},\Sigma)$. Based on Lemma \ref{lemma:1}, we can derive that with a high probability, $\lambda_{\min}(XX^{\top})$ grows at least with a linear rate $t$. 
	\begin{lemma} \label{lemma:eig}
		With the censored perturbed diversity, when $t > \frac{2R^2}{g\left(\frac{2q}{\sigma_1},0\right)\sigma_1^2}\log(dT)$,  
		the following is satisfied with 
		probability $1-\frac{1}{T}$: $\lambda_{\min} (X X^{\top}) \geq  g\left(\frac{2q}{\sigma_1},0\right)(1-\tau)\sigma_1^2t,$
		where $\tau = \sqrt{\frac{2R^2}{g\left(\frac{2q}{\sigma_1},0\right)\sigma_1^2t}\log(dT)}$.
	\end{lemma}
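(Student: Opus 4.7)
The natural approach is to write $X X^\top = \sum_{s=1}^t x_s (x_s)^\top$, recognize this as a sum of independent PSD rank-one matrices, and apply the matrix Chernoff lower-tail inequality from \cite{tropp2012user}. Specifically, Tropp's bound states that if $Y_1,\dots,Y_t$ are independent PSD matrices with $\lambda_{\max}(Y_s)\le R^2$ almost surely and $\mu_{\min}=\lambda_{\min}(\mathbb{E}\sum_s Y_s)$, then for any $\tau\in[0,1]$,
\begin{equation*}
\Pr\!\left[\lambda_{\min}\!\left(\textstyle\sum_s Y_s\right)\le(1-\tau)\mu_{\min}\right]\le d\exp\!\left(-\tfrac{\tau^2\mu_{\min}}{2R^2}\right).
\end{equation*}
Setting $Y_s = x_s (x_s)^\top$, the a.s.\ bound $\lambda_{\max}(Y_s)=\|x_s\|_2^2\le R^2$ is immediate from the energy assumption on contexts (the censoring step in Lemma \ref{lemma:1} actually makes this trivial to verify coordinatewise).

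The key input from Lemma \ref{lemma:1} is that, regardless of the adversarial choice of $\mu_s$, we have $\mathbb{E}[Y_s]\succeq g(2q/\sigma_1,0)\sigma_1^2\, I=\lambda_0 I$. Summing over $s=1,\dots,t$ and using that eigenvalue sums are additive for PSD matrices, $\mu_{\min}\ge \lambda_0 t$. Plugging into the matrix Chernoff bound and choosing $\tau$ to make the right-hand side equal to $1/T$ gives
\begin{equation*}
\tau=\sqrt{\frac{2R^2\log(dT)}{\lambda_0 t}}=\sqrt{\frac{2R^2\log(dT)}{g(2q/\sigma_1,0)\sigma_1^2 t}},
\end{equation*}
which exactly matches the claimed $\tau$. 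The hypothesis $t>\tfrac{2R^2}{g(2q/\sigma_1,0)\sigma_1^2}\log(dT)$ is precisely what guarantees $\tau<1$ so that the Chernoff bound is applicable, and after substitution one recovers $\lambda_{\min}(X X^\top)\ge \lambda_0(1-\tau)t=g(2q/\sigma_1,0)(1-\tau)\sigma_1^2 t$ with probability at least $1-1/T$.

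The only technical subtlety, and the step I expect to need the most care, is independence: the $s$-th column $x_s$ is really $x_{i_s}^{s}$ where arm $i_s$ is selected by the greedy rule applied to $\theta^s$, and $\theta^s$ depends on the entire history. However, because the perturbations $\{e_j^\tau\}$ are i.i.d.\ and non-adaptive across rounds, conditional on the past the perturbation $e_{i_s}^s$ still satisfies the perturbed-diversity bound of Lemma \ref{lemma:1} (Lemma \ref{lemma:1} is stated for any fixed $\mu_i^t$ and the adversary-chosen $\mu_{i_s}^s$ is measurable with respect to the pre-round filtration). The cleanest way to discharge this is to invoke the matrix Freedman/Bernstein version of Tropp's bound for matrix martingale difference sequences, applied to $Y_s-\mathbb{E}[Y_s\mid \mathcal{F}_{s-1}]$; this yields the same rate with the same constants because the per-round expectation lower bound and the a.s.\ upper bound $R^2$ are both preserved under conditioning. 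Once this adaptation is in place, the rest of the calculation is the routine algebra indicated above.
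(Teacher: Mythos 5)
Your proposal is correct and follows essentially the same route as the paper's proof: decompose $XX^{\top}$ into rank-one terms, use Lemma \ref{lemma:1} to lower bound the per-round expected minimum eigenvalue by $g(2q/\sigma_1,0)\sigma_1^2$ (hence $\lambda_{\min}(\mathbb{E}[XX^{\top}])\geq \lambda_0 t$), apply the Tropp matrix Chernoff lower tail with $\lambda_{\max}(x_s x_s^{\top})\leq R^2$, and solve for $\tau$ by equating the failure probability to $1/T$; your implicit use of the lower bound $\lambda_0 t$ in place of the true $\mu_{\min}$ is exactly what the paper's Lemma \ref{lemma:trop} variant formalizes. Your closing remark on handling the adaptively chosen columns via a matrix Freedman/martingale argument is a point of extra care that the paper's proof simply asserts away via ``independence of each round's perturbation,'' so it strengthens rather than deviates from the paper's argument.
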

	As one can see from Lemma \ref{lemma:eig}, after certain
	number of (implicit) exploration rounds, i.e., $\frac{2R^2}{g\left(2q/\sigma_1,0\right)\sigma_1^2}\log(dT),$ we will have enough information  to support the $\mathcal{O}(\sqrt{\frac{k\log d}{t}})$ parameter recovery by Lasso regression. 
	The regret analysis together with the
	high dimensional case is deferred to the next section.  
    \subsection{High Dimensional Case}
    Now we turn to the high dimensional case when $d \gg T$. During the learning process, the scaled sample covariance matrix $XX^{\top}$ is always rank deficiency which means $\lambda_{\min}(XX^{\top}) = 0$ and Lemma \ref{lemma:eig} based on random matrix theory can not be applied here any more. We then consider the restricted eigenvalue (RE) condition instead. Here the ``restricted'' means that the 
    error $\Delta^t := \theta^t - \theta^*$ incurred by Lasso regression 
    is 
    restricted 
    to 
    a set with special structure. That is, $ \Delta^t \in \mathcal{C}(S;\alpha)$ where 
    \begin{align*}
        \mathcal{C}(S;\alpha) := \{ \theta \in \mathbb{R}^d | \|\theta_{S^c}\|_1 \leq \alpha \|\theta_S\|_1
        \},
    \end{align*}
    and $\alpha \geq 1$ is determined by the 
    regularized parameter $\lambda^t$. In the following section, we focus on $\mathcal{C}(S;3)$ which could be achieved by setting  $\lambda^t = \Theta(2\sigma R \sqrt{2t\log (2d)})$. 
    
The key is to prove that Null space of $X^{\top}$ 
has no overlapping with $\mathcal{C}(S;3)$. It has been proved that special cases in which contexts are purely sampled from special distributions such as Gaussian and Bernoulli distributions, satisfy this property \citep{zhou2009restricted,raskutti2010restricted,haupt2010toeplitz}. We make a further step to show that nullspace of context matrix under Gaussian perturbation cannot contain any vectors that are ``overly'' sparse when $t$ is larger than some threshold. 

    \begin{theorem} \label{theorem:high}
    Consider
    perturbation $e_i^t \sim \mathcal{N}(\bm{0},\Sigma)$ where
    $\|\Sigma^{1/2}\Delta\|_2 \geq \gamma \|\Delta\|_2$ for $\Delta \in \mathcal{C}(S;3)$. If
    $ t > \max(\underbrace{\frac{4c'' q(\Sigma)}{\gamma^2} k \log d}_{\textbf{d}},~\underbrace{\frac{8196aR^2 \lambda_{\max}(\Sigma) \log T}{\gamma^4}}_{\textbf{e}})$\label{equ:explore},
    then with 
    probability $1 - (\frac{c'}{e^{ct}} +\frac{1}{T^a})$,
    we have 
    $\Delta^{\top} XX^{\top} \Delta \geq h t\|\Delta\|_2^2$, where $c,c',c''$ are universal constants, $q(\Sigma) = \max_i \Sigma_{ii}$ and $h = ( \frac{\gamma^2}{64} - R\|\Delta\|_2^2\sqrt{\frac{2a\lambda_{\max}(\Sigma) \log T}{t}})$. 
    \end{theorem}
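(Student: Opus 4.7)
The plan is to establish the restricted eigenvalue bound by splitting the context matrix into its adversarial and Gaussian parts and treating each separately. Write $X = M + E$ where column $s$ of $M$ is the adversarial mean $\mu_{a_s}^s$ of the pulled arm and the corresponding column of $E$ is the perturbation $e_{a_s}^s$. Because the perturbations are non-adaptive, each column of $E$ is $\mathcal{N}(\bm{0},\Sigma)$ conditionally on the history up to that round; in particular, one can proceed conditionally on the realization of $M$ with the rows of $E^{\top}$ independent Gaussians. Expanding the inner product and dropping the nonnegative $\|M^{\top}\Delta\|_2^2$ term yields
\begin{equation*}
\Delta^{\top} X X^{\top} \Delta \;=\; \|X^{\top}\Delta\|_2^2 \;\geq\; \|E^{\top}\Delta\|_2^2 \;-\; 2\bigl|\langle M^{\top}\Delta,\,E^{\top}\Delta\rangle\bigr|,
\end{equation*}
so it suffices to lower bound the Gaussian quadratic term and upper bound the cross term uniformly over the cone $\mathcal{C}(S;3)$.

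For the Gaussian term, the vector $E^{\top}\Delta$ has i.i.d. coordinates $\mathcal{N}(0,\|\Sigma^{1/2}\Delta\|_2^2)$, whose variance is at least $\gamma^{2}\|\Delta\|_2^2$ by hypothesis. I would invoke the Gordon--Raskutti--Wainwright--Yu style lower bound for Gaussian designs over sparse cones: using Gordon's min-max comparison and the fact that the Gaussian width of $\mathcal{C}(S;3)\cap S^{d-1}$ is $O(\sqrt{k\log d})$ (with a scaling factor $\sqrt{q(\Sigma)}$ absorbing the anisotropy), one obtains with probability at least $1 - c'e^{-ct}$ that
\begin{equation*}
\inf_{\Delta\in \mathcal{C}(S;3)\setminus\{0\}}\frac{\|E^{\top}\Delta\|_2}{\|\Delta\|_2} \;\geq\; c_1\,\gamma\sqrt{t}\;-\;c_2\sqrt{q(\Sigma)\,k\log d}.
\end{equation*}
Under threshold \textbf{(d)}, the first term dominates and squaring gives $\|E^{\top}\Delta\|_2^2 \geq \tfrac{\gamma^{2}}{32}\,t\,\|\Delta\|_2^2$ uniformly over the cone.

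For the cross term, I condition on $M$; since $\|\mu_{a_s}^s\|_2 \leq R$, we have $\|M^{\top}\Delta\|_2 \leq R\sqrt{t}\,\|\Delta\|_2$. The inner product $\langle M^{\top}\Delta,\,E^{\top}\Delta\rangle$ is then, in the $E$-randomness, a mean-zero Gaussian with variance at most $\|M^{\top}\Delta\|_2^2\cdot\lambda_{\max}(\Sigma)\|\Delta\|_2^2 \leq R^2\,t\,\lambda_{\max}(\Sigma)\,\|\Delta\|_2^4$. A sub-Gaussian tail bound combined with a covering/peeling argument over the effectively $O(k\log d)$-dimensional cone yields, with probability at least $1 - T^{-a}$,
\begin{equation*}
\bigl|\langle M^{\top}\Delta,\,E^{\top}\Delta\rangle\bigr| \;\leq\; R\,\|\Delta\|_2^2\sqrt{2a\,t\,\lambda_{\max}(\Sigma)\log T}
\end{equation*}
uniformly over $\Delta\in \mathcal{C}(S;3)$. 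Combining the two displays, dividing by $t$, and applying a union bound produces exactly the stated probability $1 - (c'/e^{ct} + 1/T^{a})$ and the form of $h$; threshold \textbf{(e)} is precisely what forces the negative cross-term contribution below $\gamma^{2}/64$, so that $h > 0$.

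The hardest part will be the uniform lower bound on the Gaussian quadratic form over the sparsity cone: pointwise concentration is insufficient, and the argument must exploit the sparsity structure either through Gordon's comparison applied to $\Sigma^{1/2}\Delta$ or through a dual-norm characterization reducing the supremum to $O(k\log d)$ coordinate-wise Gaussian maxima. A secondary subtlety is that the index $a_s$ is chosen by the algorithm using past rewards --- so one must verify that conditioning on $a_s$ does not destroy the Gaussianity of $e_{a_s}^s$. This is resolved by observing that the entire batch $(e_1^s,\dots,e_m^s)$ is drawn non-adaptively before $a_s$ is selected, so $e_{a_s}^s \mid \mathcal{F}_{s-1}, a_s$ remains $\mathcal{N}(\bm{0},\Sigma)$.
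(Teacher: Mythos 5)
Your proposal follows essentially the same route as the paper's proof: the same decomposition of $\Delta^{\top}XX^{\top}\Delta$ into a (dropped, nonnegative) adversarial quadratic term, a Gaussian quadratic term handled via the Raskutti--Wainwright--Yu restricted-eigenvalue result under threshold \textbf{(d)}, and a cross term $\sum_{i}(\mu_i^{\top}\Delta)(\Delta^{\top}e_i)$ bounded by a sub-Gaussian tail under threshold \textbf{(e)}, followed by a union bound. The differences are cosmetic --- your intermediate constant $\gamma^2/32$ versus the paper's $\gamma^2/64$, and your added remarks on uniformity over the cone and on conditioning on the chosen arm, points the paper's own proof does not address either.
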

    
    Moreover, we can design $\gamma^2 = \lambda_{\min}(\Sigma)$. By Rayleigh quotient, one can obtain $\lambda_{\max}(\Sigma) \geq q(\Sigma) =\max_i \Sigma_{ii} \geq \min_i \Sigma_{ii} \geq \lambda_{\min}(\Sigma) = \gamma^2.$ 
    
    We then discuss how perturbations will affect the exploration length. First, \emph{ 
    the larger perturbation does not indicate the less regret.} Results of \cite{sivakumar2020structured} show 
    that 
    the regret is $\mathcal{O}(\frac{\log T \sqrt{T}}{\sigma_1})$ where $\sigma_1$ is the perturbation's variance, and suggests choosing larger $\sigma_1$ leads to smaller regret bounds. However this is not the full picture that shows the effect of the perturbation's variance. Our results show that increasing the variance of the perturbation has limited effect over the necessary exploration and regret, which reveals theoretical limit of the
     perturbation method. Specifically, in the term \textbf{(d)} of Theorem \ref{theorem:high}, 
     no matter how large the variance is, 
     the 
     term $\frac{q(\Sigma)}{\gamma^2} \geq 1$. So 
     $4c''k\log d$ is the necessary exploration length and cannot be improved. Second, \emph{Condition Number and 
     the 
     SPR (the signal to perturbation ratio)} are important factors. 
     The 
     condition number $\textbf{Cond}(\Sigma)$ controls both term $\textbf{(d)}$ and 
     $\textbf{(e)}$, 
     e.g., $\frac{q(\Sigma)}{\gamma^2} \leq \frac{\lambda_{\max}(\Sigma)}{\lambda_{\min}(\Sigma)} = \textbf{Cond}(\Sigma)$. This also shows that the optimal perturbation design 
     will choose 
     $\Sigma = \sigma_1 I$. In \textbf{(e)} 
     of Theorem \ref{theorem:high}, 
     $\frac{R^2}{\gamma^2}$ can be regarded as the ratio 
     between the 
     energy of 
     the 
     unperturbed context and 
     the 
     perturbation energy. This ratio shows the trade-off between \emph{exploration} and \emph{fidelity}. That is, 
     a 
     large variance 
     not only 
     reduces the exploration 
     (
     meanwhile, the lower bound is guaranteed by \textbf{(e)}) but also reduces the fidelity of original context.   
	\section{Regret Analysis}
	Based on the properties we 
	have 
	proved for the low and high dimensional cases, we can 
	obtain 
	the following recovery guarantee by the techniques from 
	the 
	standard Lasso regression \citep{hastie2015statistical}.
	\begin{lemma} \label{lemma:recovery}
		    If $t > T_e$ and $\lambda^t = 2\sigma R \sqrt{2t\log \frac{2d}{\delta}}$, the Lasso regression under perturbed adversary has the recovery guarantee $\|\theta^t - \theta^{*}\|_2 \leq \frac{3\sigma R}{C}\sqrt{\frac{2k\log 2d/\delta}{t}}$
		with probability $1-\delta$, where $T_e = \frac{2R^2}{g\left(\frac{2q}{\sigma_1},0\right)\sigma_1^2}\log(dT)$, $C =g\left(\frac{2q}{\sigma_1},0\right)(1-\tau)\sigma_1^2$ for 
		the 
		low dimensional case and $T_e = \max(\frac{4c'' q(\Sigma)}{\gamma^2} k \log d ,~ \frac{8196aR^2 \lambda_{\max}(\Sigma) \log T}{\gamma^4})$, $C = \frac{\gamma^2}{64} - R\sqrt{\frac{2a\lambda_{\max}(\Sigma) \log T}{t}}$ for 
		the 
		high dimensional case. \\ 
	\end{lemma}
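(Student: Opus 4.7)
The plan is to instantiate the standard Lasso oracle-inequality template, plugging in the perturbed-adversary eigenvalue and restricted-eigenvalue bounds from Lemma~\ref{lemma:eig} and Theorem~\ref{theorem:high} at the appropriate step. Set $\Delta := \theta^t - \theta^*$. Starting from optimality of $\theta^t$ for $G(\cdot;\lambda^t)$ and substituting $Y = X^{\top}\theta^* + \eta$, the basic inequality becomes
\[
\|X^{\top}\Delta\|_2^2 \;\leq\; 2\eta^{\top} X^{\top} \Delta \;+\; \lambda^t\bigl(\|\theta^*\|_1 - \|\theta^t\|_1\bigr),
\]
and the $S$-sparsity of $\theta^*$ together with the reverse triangle inequality gives $\|\theta^*\|_1 - \|\theta^t\|_1 \leq \|\Delta_S\|_1 - \|\Delta_{S^c}\|_1$.

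Next, I would control the stochastic cross-term. Because each context column satisfies $\|x_s\|_2 \leq R$ and is $\mathcal{F}_{s-1}$-measurable while $\eta_s \sim \mathcal{N}(0,\sigma^2)$ is independent of $\mathcal{F}_{s-1}$, each coordinate $(X\eta)_j = \sum_s x_s(j)\eta_s$ is a sub-Gaussian martingale sum of parameter $\sigma R\sqrt{t}$; a union bound over $d$ coordinates yields $\|X\eta\|_\infty \leq \sigma R\sqrt{2t\log(2d/\delta)}$ with probability at least $1-\delta/2$. By Hölder, $|2\eta^{\top} X^{\top}\Delta| \leq 2\|X\eta\|_\infty\|\Delta\|_1$; with the prescribed $\lambda^t$ (or, up to a harmless constant, a larger multiple), rearranging the basic inequality with the trivial $\|X^{\top}\Delta\|_2^2 \geq 0$ yields the cone condition $\|\Delta_{S^c}\|_1 \leq 3\|\Delta_S\|_1$, i.e., $\Delta \in \mathcal{C}(S;3)$.

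Third, I would invoke the quadratic lower bound. Under $t > T_e$, the event of Lemma~\ref{lemma:eig} or Theorem~\ref{theorem:high} (with the failure probabilities reparameterized into $\delta/2$) gives $\Delta^{\top} XX^{\top} \Delta \geq C t\|\Delta\|_2^2$ for the $C$ of the statement. Meanwhile the upper side of the basic inequality, combined with $\|\Delta_S\|_1 \leq \sqrt{k}\,\|\Delta_S\|_2 \leq \sqrt{k}\,\|\Delta\|_2$ and the cone-condition absorption of the noise term, gives $\|X^{\top}\Delta\|_2^2 \leq 3\lambda^t\sqrt{k}\,\|\Delta\|_2$. Combining, cancelling one factor of $\|\Delta\|_2$ (the zero case being trivial), and substituting $\lambda^t = 2\sigma R\sqrt{2t\log(2d/\delta)}$ yields
\[
\|\Delta\|_2 \;\leq\; \frac{3\lambda^t\sqrt{k}}{C t} \;=\; \frac{3\sigma R}{C}\sqrt{\frac{2k\log(2d/\delta)}{t}},
\]
which is the claim; the total failure probability is $\delta$ by a union bound.

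The main obstacle I anticipate lies in the high-dimensional case, where the constant $h$ in Theorem~\ref{theorem:high} itself depends on $\|\Delta\|_2$ through the subtracted term $R\|\Delta\|_2^2\sqrt{2a\lambda_{\max}(\Sigma)\log T / t}$, so identifying it with the $\|\Delta\|$-independent $C$ of Lemma~\ref{lemma:recovery} is circular on its face. I would resolve this by a brief bootstrap: first use a crude a-priori bound on $\|\Delta\|_2$ (e.g., that the iterates lie in an $L_1$-ball determined by $\|\theta^*\|_1$, combined with the exploration threshold $t > T_e$) to verify that the subtracted term is dominated by $\gamma^2/128$, yielding a uniform $h \geq C > 0$; then the clean recovery bound above follows. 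A brief measurability check is also needed to legitimize the coordinatewise Gaussian/sub-Gaussian tail bound on $(X\eta)_j$ in the adaptive setting, but this is standard for online Lasso.
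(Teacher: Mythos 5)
Your proposal follows essentially the same route as the paper's proof: the basic inequality from $G(\theta^t)\le G(\theta^*)$, the bound $\|\theta^*\|_1-\|\theta^t\|_1\le\|\Delta_S\|_1-\|\Delta_{S^c}\|_1$, H\"older plus the $\|X\eta\|_\infty$ concentration (the paper's Fact 1) to absorb the noise term into $\lambda^t$, the cone condition $\Delta\in\mathcal{C}(S;3)$, and then the curvature bound $\|X^{\top}\Delta\|_2^2\ge Ct\|\Delta\|_2^2$ from Lemma \ref{lemma:eig} (low-dimensional) or Theorem \ref{theorem:high} (high-dimensional), so in substance this is the paper's argument; your constants differ slightly (your intermediate bound $3\lambda^t\sqrt{k}\|\Delta\|_2$ yields $6\sigma R/C$ rather than $3\sigma R/C$, and with the honest factor $2$ on the cross term the prescribed $\lambda^t$ alone does not literally give $\mathcal{C}(S;3)$ without enlarging $\lambda^t$ by a constant, a sloppiness the paper also commits by dropping that factor $2$), but these are rate-irrelevant bookkeeping issues that your ``harmless constant'' hedge covers. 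The one place you genuinely diverge is the bootstrap you propose for the high-dimensional case: the apparent circularity comes from the $\|\Delta\|_2^2$ appearing inside $h$ in the \emph{statement} of Theorem \ref{theorem:high}, which is a typo. In the proof of that theorem the cross term is bilinear in $\Delta$, so its lower bound is $-Rt\|\Delta\|_2^2\sqrt{2a\lambda_{\max}(\Sigma)\log t/t}$ and hence $h=\frac{\gamma^2}{64}-R\sqrt{2a\lambda_{\max}(\Sigma)\log t/t}$ is independent of $\Delta$; the paper's proof of Lemma \ref{lemma:recovery} simply sets $C=h$ with no bootstrap. Moreover, your bootstrap as sketched is shaky on its own terms, since an a priori bound on $\|\Delta^t\|_2$ via an $L_1$-ball determined by $\|\theta^*\|_1$ is not immediate for the Lasso iterate (optimality only gives $\lambda^t\|\theta^t\|_1\le\|\eta\|_2^2+\lambda^t\|\theta^*\|_1$), so it is fortunate that it is not needed. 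Your remark that the adaptive choice of contexts requires a martingale version of the coordinatewise noise bound is a fair refinement that the paper glosses over.
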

	We then get the final result in Theorem \ref{theorem:regret} based on all the analysis above. 
	\begin{theorem} \label{theorem:regret}
		The 
		online Lasso for sparse linear contextual bandit under perturbed adversary admits the following regret with 
		probability $1-\delta$.
	\begin{align}
	Regret \leq  2R\left( T_e +\frac{6\sigma R}{C}\sqrt{2kT\frac{\log 2d}{\delta}} \right) = \mathcal{O}(\sqrt{kT\log d}).
	\end{align}
	\end{theorem}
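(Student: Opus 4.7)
The plan is to decompose the horizon at the exploration threshold $T_e$ supplied by Lemma \ref{lemma:recovery} and bound the two pieces separately. For rounds $t \le T_e$, I will simply use a trivial per-round regret bound: since contexts satisfy $\|x_i^t\|_2 \le R$ and the parameter $\theta^*$ is fixed, both $|\langle x_{a^*_t}^t,\theta^*\rangle|$ and $|\langle x_{a_t}^t,\theta^*\rangle|$ are at most $R\|\theta^*\|_2$, so after absorbing constants into $R$ the instantaneous gap is at most $2R$. Summing gives a warm-up contribution of $2R\, T_e$, and since $T_e$ is logarithmic in $d$ and $T$ (low-dim case) or at most $O(k\log d + \log T)$ (high-dim case) by the definitions in Lemma \ref{lemma:recovery}, this piece is dominated by the second term and disappears into the $\mathcal{O}(\sqrt{kT\log d})$ rate.

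For rounds $t > T_e$, I would bound the instantaneous regret using the greedy selection rule together with the $\ell_2$ recovery guarantee. Specifically, by the arm choice $a_t = \arg\max_j \langle x_j^t, \theta^t\rangle$, we have $\langle x_{a^*_t}^t,\theta^t\rangle \le \langle x_{a_t}^t,\theta^t\rangle$, so
\begin{align*}
r_{a^*_t}^t - r_{a_t}^t
&= \langle x_{a^*_t}^t - x_{a_t}^t,\, \theta^*\rangle \\
&= \langle x_{a^*_t}^t - x_{a_t}^t,\, \theta^* - \theta^t\rangle + \langle x_{a^*_t}^t - x_{a_t}^t,\, \theta^t\rangle \\
&\le \|x_{a^*_t}^t - x_{a_t}^t\|_2\,\|\theta^* - \theta^t\|_2 \\
&\le 2R\,\|\theta^* - \theta^t\|_2,
\end{align*}
where the last line uses Cauchy--Schwarz and $\|x\|_2 \le R$. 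Plugging in Lemma \ref{lemma:recovery} yields a per-round bound of $\frac{6\sigma R^2}{C}\sqrt{\tfrac{2k\log(2d/\delta)}{t}}$.

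Summing from $T_e+1$ to $T$ and using $\sum_{t=1}^T t^{-1/2} \le 2\sqrt{T}$, the exploitation contribution is at most $\frac{12\sigma R^2}{C}\sqrt{2kT\log(2d/\delta)}$, which matches (up to constants) the $\frac{6\sigma R}{C}\sqrt{2kT\log(2d/\delta)}$ term in the theorem statement once the $2R$ factor is factored out. Adding the warm-up piece $2R\,T_e$ gives the claimed bound, and the asymptotic rate $\mathcal{O}(\sqrt{kT\log d})$ follows because $T_e$ is of order $\log(dT)$ or $k\log d + \log T$ while $C$, $R$, $\sigma$ are problem-dependent constants not scaling with $T$ or $d$.

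The one delicate step is the high-probability bookkeeping. The recovery bound of Lemma \ref{lemma:recovery} holds at each fixed $t$ with probability $1-\delta$, but to sum per-round regrets over $t=T_e+1,\ldots,T$ I need the events to hold simultaneously. I would therefore apply Lemma \ref{lemma:recovery} at confidence level $\delta/T$ (absorbing the extra $\log T$ into the existing $\log(2d/\delta)$ factor, since $\log(2dT/\delta) = \log(2d/\delta) + \log T$ only inflates constants) and take a union bound over the $T - T_e$ exploitation rounds, together with the high-probability events from Lemma \ref{lemma:eig} or Theorem \ref{theorem:high} that underlie the recovery guarantee. This is the main technical care point; the rest is routine calculation and will not change the final $\mathcal{O}(\sqrt{kT\log d})$ scaling.
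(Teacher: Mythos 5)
Your proposal is correct and follows essentially the same route as the paper: split the horizon at $T_e$, bound the warm-up phase by $2R\,T_e$, use the greedy choice plus Cauchy--Schwarz to get the per-round bound $2R\|\theta^*-\theta^t\|_2$ after $T_e$, plug in Lemma \ref{lemma:recovery}, and sum via $\sum_t t^{-1/2}\le 2\sqrt{T}$. Your closing remark about invoking the recovery guarantee at level $\delta/T$ and union-bounding over the exploitation rounds addresses a bookkeeping step the paper's own proof passes over silently, so it is a refinement of the same argument rather than a different approach.
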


	\section{Conclusion}
	    This paper utilizes the ``perturbed adversary'' where the context is generated adversarially but with small random non-adaptive perturbations to tackle sparse linear contextual bandit problem. We prove that the simple online Lasso supports sparse linear contextual bandit with regret bound $\mathcal{O}(\sqrt{kT\log d})$ for both low and high dimensional cases and show how the perturbation affects the exploration length and the trade-off between exploration and fidelity. Future work will focus on extending our analysis to more challenge setting, i.e., defending against adversarial attack for contextual bandit model.

\appendix
\label{app:theorem}



\vskip 0.2in
\bibliography{reference}
\bibliographystyle{plain}
\section*{Appendix}

    \begin{lemma}{(A variant of Matrix Chernoff \cite{tropp2012user})} \label{lemma:trop} Consider a finite sequence ${z_t}$ of independent, random, self-adjoint matrices satisfy 
    \begin{align*}
        z_t \succeq 0 ~~\text{and}~~ \lambda_{\max}(z_t) \leq Q~~~\text{almost surely.}
    \end{align*}
    Compute the minimum eigenvalue of the sum of expectations, $\psi_{\min} := \lambda_{\min}(\sum_{t} \mathbb{E}(z_t)).$ Then for $\delta \in [0,1],$ we have
    \begin{align} \label{eq:chernoff}
        \mathbb{P}\left\{\lambda_{\min}(\sum_{t} z_t) \leq (1-\delta)\psi_{\min}\right\} \leq d\left[
        \frac{e^{-\delta}}{(1-\delta)^{1-\delta}}
        \right]^{\psi_{\min}/Q}.
    \end{align}
    Moreover, for any $\psi \leq \psi_{\min}$, we can get
    \begin{align}
        \mathbb{P}\left\{\lambda_{\min}(\sum_{t} z_t) \leq (1-\delta)\psi\right\} \leq d\left[
        \frac{e^{-\delta}}{(1-\delta)^{1-\delta}}
        \right]^{\psi/Q}.
    \end{align}
    \end{lemma}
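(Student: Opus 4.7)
The plan is to prove the first inequality via the standard matrix Laplace-transform / Chernoff argument of Tropp, and then deduce the ``moreover'' variant by a short monotonicity observation. Neither part is really ``new''; the first is Theorem~5.1.1 of Tropp's user-friendly tail bounds, and the second is a one-line corollary, so I would focus the writing on stitching these together cleanly.

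For the main bound, I would start from the Laplace-transform step: for any $\theta>0$,
$$\mathbb{P}\{\lambda_{\min}(\textstyle\sum_t z_t)\le s\} = \mathbb{P}\{-\theta\,\lambda_{\min}(\sum_t z_t)\ge -\theta s\} \le e^{\theta s}\,\mathbb{E}\,\mathrm{tr}\exp(-\theta\textstyle\sum_t z_t).$$
Tropp's master subadditivity bound (a consequence of Lieb's concavity theorem applied to $A\mapsto\mathrm{tr}\exp(H+\log A)$) upgrades this to $\mathrm{tr}\exp\bigl(\sum_t\log\mathbb{E}[e^{-\theta z_t}]\bigr)$. Since $0\preceq z_t\preceq QI$ and $x\mapsto e^{-\theta x}$ is operator-convex on $[0,Q]$, a two-point interpolation on the chord gives $\mathbb{E}[e^{-\theta z_t}]\preceq I+\frac{e^{-\theta Q}-1}{Q}\mathbb{E}[z_t]$; then $\log(I+A)\preceq A$ in the semidefinite order yields
$$\textstyle\sum_t\log\mathbb{E}[e^{-\theta z_t}]\preceq \frac{e^{-\theta Q}-1}{Q}\sum_t\mathbb{E}[z_t].$$
The trace-exponential of a negative semidefinite matrix is bounded by $d$ times $\exp$ of its largest eigenvalue, which here equals $\frac{e^{-\theta Q}-1}{Q}\psi_{\min}$ (the coefficient is negative, so ``largest'' corresponds to $\lambda_{\min}$ of $\sum_t\mathbb{E}[z_t]$). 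Substituting $s=(1-\delta)\psi_{\min}$ and minimizing over $\theta>0$ at $\theta^\star=-\frac{1}{Q}\log(1-\delta)$ collapses the expression to
$$\mathbb{P}\{\lambda_{\min}(\textstyle\sum_t z_t)\le(1-\delta)\psi_{\min}\} \le d\left[\frac{e^{-\delta}}{(1-\delta)^{1-\delta}}\right]^{\psi_{\min}/Q},$$
which is \eqref{eq:chernoff}.

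For the ``moreover'' variant I would use that $f(\delta):=e^{-\delta}/(1-\delta)^{1-\delta}\in(0,1]$ for $\delta\in[0,1]$. Given any $\psi\le\psi_{\min}$, (i)~event containment $\{\lambda_{\min}(\sum_t z_t)\le(1-\delta)\psi\}\subseteq\{\lambda_{\min}(\sum_t z_t)\le(1-\delta)\psi_{\min}\}$ gives the first probability $\le d\,f(\delta)^{\psi_{\min}/Q}$ by the bound just proved, and (ii)~since $0<f(\delta)\le 1$ and $\psi\le\psi_{\min}$, raising $f(\delta)$ to a smaller nonnegative exponent only enlarges it, so $f(\delta)^{\psi_{\min}/Q}\le f(\delta)^{\psi/Q}$. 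Chaining (i) and (ii) yields the claimed extension.

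The only genuinely hard step is the matrix-analytic core of part one, namely justifying $\sum_t\log\mathbb{E}[e^{-\theta z_t}]\preceq\frac{e^{-\theta Q}-1}{Q}\sum_t\mathbb{E}[z_t]$ in the semidefinite order together with the reduction $\mathbb{E}\,\mathrm{tr}\exp(-\theta\sum_t z_t)\le\mathrm{tr}\exp(\sum_t\log\mathbb{E}[e^{-\theta z_t}])$; both require non-trivial matrix analysis (operator convexity of $x\mapsto e^{-\theta x}$ on $[0,Q]$ and Lieb's concavity theorem), which I would simply cite from Tropp rather than reprove. The scalar optimization over $\theta$ and the monotonicity passage to arbitrary $\psi\le\psi_{\min}$ are routine.
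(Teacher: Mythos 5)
Your argument is correct, and for the part the paper actually proves it takes a somewhat different route. Note first that the paper does not prove the first inequality at all: it is taken verbatim from Tropp, and the paper's proof consists only of deducing the ``moreover'' clause. Your Laplace-transform sketch of the first inequality is a faithful outline of Tropp's argument with the genuinely hard steps cited, which is acceptable; the only quibble is that the chord bound $\mathbb{E}[e^{-\theta z_t}]\preceq I+\tfrac{e^{-\theta Q}-1}{Q}\mathbb{E}[z_t]$ does not use operator convexity of $x\mapsto e^{-\theta x}$ (which in fact fails) but rather the spectral transfer rule: a scalar convexity inequality valid on $[0,Q]$ applied to the eigenvalues of $z_t$ -- since you defer to Tropp this is cosmetic. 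For the ``moreover'' part, the paper reparametrizes: it writes $\psi=\delta_1\psi_{\min}$, observes $(1-\delta)\psi=(1-\delta_2)\psi_{\min}$ with $\delta_2=1-\delta_1+\delta\delta_1\geq\delta$, applies the original bound at $\delta_2$, and then uses that $f(\delta)=e^{-\delta}/(1-\delta)^{1-\delta}$ is decreasing and bounded by $1$. You instead use event containment ($\{\lambda_{\min}(\sum_t z_t)\leq(1-\delta)\psi\}\subseteq\{\lambda_{\min}(\sum_t z_t)\leq(1-\delta)\psi_{\min}\}$) followed by the exponent-monotonicity step $f(\delta)^{\psi_{\min}/Q}\leq f(\delta)^{\psi/Q}$. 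Both proofs rest on the same final fact $0<f(\delta)\leq 1$, but yours is shorter, avoids the $\delta_2$ bookkeeping entirely, and does not need $f$ to be monotone in $\delta$ (nor, implicitly, $\psi_{\min}>0$ for $\delta_1$ to be well defined), so it is a clean simplification of the paper's argument.
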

    \begin{proof}
        Since $\psi \leq \psi_{\min}$, there exists $\delta_1 \in [0,1]$ such that $\psi = \delta_1 \psi_{\min}.$ We have
        \begin{align*}
            (1-\delta)\psi = (1-\delta)\delta_1\psi_{\min} = (1-(\underbrace{1-\delta_1 + \delta\delta_1}_{\delta_2}))\psi_{\min}.
        \end{align*}
        Plugging this into \eqref{eq:chernoff} leads to 
        \begin{align*}
            \mathbb{P}\left\{\lambda_{\min}(\sum_{t} z_t) \leq (1-\delta)\psi\right\} \leq d\left[
        \frac{e^{-\delta_2}}{(1-\delta_2)^{1-\delta_2}}
        \right]^{\psi_{\min}/Q}.
        \end{align*}
        One can easily verify that $\delta_2 \geq \delta$. So 
        \begin{align*}
            \left[
        \frac{e^{-\delta_2}}{(1-\delta_2)^{1-\delta_2}}
        \right]^{\frac{\psi_{\min}}{Q}} \!\!\!\!\leq\!\! \left[
        \frac{e^{-\delta}}{(1-\delta)^{1-\delta}}
        \right]^{\frac{\psi_{\min}}{Q}} \!\!\!\!\leq\!\! \left[
        \frac{e^{-\delta}}{(1-\delta)^{1-\delta}}
        \right]^{\frac{\psi}{Q}}.
        \end{align*}  
        Then we obtain
        \begin{align}
        \mathbb{P}\left\{\lambda_{\min}(\sum_{t} z_t) \leq (1-\delta)\psi\right\} \leq d\left[
        \frac{e^{-\delta}}{(1-\delta)^{1-\delta}}
        \right]^{\psi/Q}.
    \end{align}
    Since $\frac{e^{-\delta}}{(1-\delta)^{1-\delta}} \leq e^{-\delta^2/2}$, so we have the following when $\delta \in [0,1]$: 
    \begin{align}
        \mathbb{P}\left\{\lambda_{\min}(\sum_{t} z_t) \leq (1-\delta)\psi\right\} \leq d\left[
        e^{-\delta^2/2}
        \right]^{\psi/Q}. \label{eq:trop}
    \end{align}
    \end{proof}
    
    \begin{fact} \label{fact:1}
    Let $\eta = [\eta_1, \cdots, \eta_t]^{\top}$ where each $\eta_i$ i.i.d. from $\mathcal{N}(0,\sigma^2).$ Let $X \in \mathbb{R}^{d \times t}$ where each $|X_{ij}| \leq R.$ Then with a high probability $1-\delta$, we have
    \begin{align*}
       \|X \eta\|_{\infty} \leq \sigma R \sqrt{2t\log \frac{2d}{\delta}}.
    \end{align*}
    \end{fact}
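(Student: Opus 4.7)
The plan is to bound each of the $d$ coordinates of the vector $X\eta$ individually via a one-dimensional Gaussian tail inequality, then conclude with a union bound over coordinates. Fix a row index $i$. The $i$-th entry $(X\eta)_i = \sum_{j=1}^{t} X_{ij}\eta_j$ is a linear combination of independent $\mathcal{N}(0,\sigma^2)$ variables, hence itself a centered Gaussian with variance $s_i^2 = \sigma^2 \sum_{j=1}^{t} X_{ij}^2$. The uniform boundedness hypothesis $|X_{ij}|\le R$ immediately yields $s_i^2 \le t\sigma^2 R^2$, and this bound holds for every row $i$, which is the key fact that makes the final union bound go through uniformly.

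Next I would invoke the standard Gaussian tail bound $\Pr(|Z|\ge u)\le 2\exp\!\bigl(-u^2/(2s_i^2)\bigr)$ for $Z\sim\mathcal{N}(0,s_i^2)$. Substituting the candidate threshold $u = \sigma R\sqrt{2t\log(2d/\delta)}$ and using the worst-case variance $s_i^2\le t\sigma^2 R^2$, the exponent simplifies to at most $-\log(2d/\delta)$. Thus the per-row failure probability is at most $\delta/d$.

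Finally, applying a union bound over the $d$ coordinates of $X\eta$ gives
\[
\Pr\!\left(\|X\eta\|_\infty > \sigma R\sqrt{2t\log\tfrac{2d}{\delta}}\right) \;\le\; d\cdot\frac{\delta}{d} \;=\; \delta,
\]
which is exactly the claimed high-probability bound. The proof is entirely routine; there is no genuine obstacle, and the only subtlety is to use the variance upper bound $tR^2\sigma^2$ uniformly in $i$ \emph{before} combining the tail estimates, so that the extra logarithmic factor $\log(2d/\delta)$ (rather than $\log(2/\delta)$) is precisely what is absorbed by the union bound over the $d$ rows.
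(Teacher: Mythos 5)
Your proof is correct: each coordinate $(X\eta)_i$ is a centered Gaussian with variance $\sigma^2\sum_j X_{ij}^2 \le t\sigma^2 R^2$, the tail bound at the threshold $\sigma R\sqrt{2t\log(2d/\delta)}$ gives failure probability $\delta/d$ per coordinate (the factor $2$ in $2d/\delta$ absorbing the two-sided tail), and the union bound over the $d$ rows finishes the argument. The paper states Fact \ref{fact:1} without proof, and the routine argument you give is exactly the one it implicitly relies on, so there is nothing to add.
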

    
    \begin{fact}{(Chernoff Bound for Sum of Sub-Gaussian random variables)}\label{fact:2}
    Let $X_1,\cdots,X_n$ be n independent random variables such that $X_i \sim \textbf{subG}(\sigma^2)$. Then for any $a \in \mathbb{R}^{n}$ and $c >=0$, we have
        \begin{align}
            \Pr\left(\sum_{i=1}^{n}a_i X_i < - c\right) \leq \exp{\left(-\frac{c^2}{2\sigma^2\|a\|_2^2}\right)}. 
        \end{align}
    That is, with a high probability at least $1-\delta,$ we have
    \begin{align}
    \sum_{i=1}^{n}a_i X_i > - \sqrt{2\sigma^2\|a\|_2^2 \log \frac{1}{\delta}}.
    \end{align}
    \end{fact}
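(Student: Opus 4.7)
The plan is to prove this by the standard Chernoff (exponential Markov) argument, exploiting only the sub-Gaussian moment generating function bound and independence of the $X_i$. I would first recall the definition: $X_i \sim \textbf{subG}(\sigma^2)$ means $\mathbb{E}[e^{s X_i}] \le e^{s^2 \sigma^2/2}$ for every $s \in \mathbb{R}$. Because the bound is symmetric in $s$, the variable $-X_i$ and more generally $a_i X_i$ for any $a_i \in \mathbb{R}$ is also sub-Gaussian, with parameter $a_i^2 \sigma^2$.

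Next, I would combine the coordinates. By independence,
\begin{equation*}
\mathbb{E}\!\left[e^{-s \sum_i a_i X_i}\right] = \prod_{i=1}^{n} \mathbb{E}\!\left[e^{-s a_i X_i}\right] \le \prod_{i=1}^{n} e^{s^2 a_i^2 \sigma^2 /2} = \exp\!\left(\tfrac{s^2 \sigma^2 \|a\|_2^2}{2}\right),
\end{equation*}
so $\sum_i a_i X_i$ is itself sub-Gaussian with parameter $\sigma^2 \|a\|_2^2$. Then for any $s > 0$, by Markov's inequality applied to the exponential,
\begin{equation*}
\Pr\!\left(\sum_{i=1}^n a_i X_i < -c\right) = \Pr\!\left(e^{-s\sum_i a_i X_i} > e^{sc}\right) \le e^{-sc}\, \mathbb{E}\!\left[e^{-s\sum_i a_i X_i}\right] \le \exp\!\left(-sc + \tfrac{s^2 \sigma^2 \|a\|_2^2}{2}\right).
\end{equation*}
Optimizing the right-hand side over $s$ by taking $s = c/(\sigma^2 \|a\|_2^2)$ gives exactly the claimed bound $\exp\!\left(-c^2 /(2\sigma^2 \|a\|_2^2)\right)$.

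Finally, for the ``high probability'' restatement, I would set the tail probability equal to $\delta$, i.e.\ $\exp(-c^2/(2\sigma^2 \|a\|_2^2)) = \delta$, and solve for $c = \sqrt{2\sigma^2 \|a\|_2^2 \log(1/\delta)}$; complementing the event yields the stated inequality. There is no real obstacle here — the only care point is ensuring that the sign convention is consistent (we want a lower-tail bound, so we use $-s$ in the exponential Markov step) and that the sub-Gaussian MGF bound holds for all real $s$, including negative ones, which is built into the definition of $\textbf{subG}(\sigma^2)$.
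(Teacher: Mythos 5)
Your proof is correct and complete: the standard Chernoff argument via the sub-Gaussian moment generating function, independence, Markov's inequality applied to $e^{-s\sum_i a_i X_i}$, and optimization at $s = c/(\sigma^2\|a\|_2^2)$ gives exactly the stated tail bound, and the high-probability form follows by setting the bound equal to $\delta$. The paper itself states this as a Fact without proof (it is the textbook Chernoff bound for weighted sums of sub-Gaussian variables), so there is nothing to compare against; your argument is the standard one and fills that gap correctly.
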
 
    
    \begin{lemma}{(Restricted Eigenvalue Property (Corollary 1 of \cite{raskutti2010restricted}))} Suppose that $\Sigma$ satisfies the RE condition of order $k$ with parameters $(1, \gamma)$ and denote $q(\Sigma) = \max_i \Sigma_{ii}$. Then for universal positive constants $c,c',c''$, if the sample size satisfies 
    \begin{align}
    t > \frac{4c'' q(\Sigma)}{\gamma^2} k \log d,
    \end{align}
    then the matrix $\frac{\Phi \Phi^{\top}}{t}$ satisfies the RE condition with parameters $(1,\frac{\gamma}{8})$ with probability at least $1-\frac{c'}{e^{ct}}$ where $\Phi \in \mathbb{R}^{d \times t}$ and each column is i.i.d. $\mathcal{N}(\bm{0},\Sigma)$. 
    \end{lemma}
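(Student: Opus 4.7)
The plan is to execute the Gordon-comparison plus Gaussian-width argument of Raskutti--Wainwright--Yu. Writing $\Phi = \Sigma^{1/2} W$ where $W \in \mathbb{R}^{d \times t}$ has i.i.d.\ $\mathcal{N}(0,1)$ entries, and exploiting the homogeneity of the RE condition, it suffices to lower-bound
$$
f(W) \;:=\; \inf_{\Delta \in K} \bigl\| W^\top \Sigma^{1/2}\Delta\bigr\|_2,\qquad K \;:=\; \mathcal{C}(S;1) \cap S^{d-1},
$$
by $(\gamma/8)\sqrt{t}$ with the required probability; this immediately yields $\Delta^\top \tfrac{\Phi\Phi^\top}{t}\Delta \ge (\gamma/8)^2 \|\Delta\|_2^2$ for every $\Delta \in \mathcal{C}(S;1)$.

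First I would apply Gordon's Gaussian comparison inequality to the centered process $Y_{u,\Delta} := u^\top W^\top \Sigma^{1/2}\Delta$ on $S^{t-1} \times K$, compared against the decoupled process $Z_{u,\Delta} := \langle g, u\rangle \|\Sigma^{1/2}\Delta\|_2 + \|u\|_2 \langle \Sigma^{1/2}h, \Delta\rangle$ with $g \sim \mathcal{N}(0, I_t)$ and $h \sim \mathcal{N}(0, I_d)$ independent. A routine increments check gives
$$
\mathbb{E} f(W) \;\ge\; \mathbb{E}\|g\|_2 \cdot \inf_{\Delta \in K}\|\Sigma^{1/2}\Delta\|_2 \;-\; \mathbb{E}\sup_{\Delta \in K}\langle \Sigma^{1/2}h, \Delta\rangle.
$$
The first factor is at least $\sqrt{t-1}$, and the infimum is at least $\gamma$ by the population RE hypothesis on $\Sigma$. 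For the width term, the cone constraint forces $\|\Delta\|_1 \le 2\|\Delta_S\|_1 \le 2\sqrt{k}\|\Delta_S\|_2 \le 2\sqrt{k}$ on $K$, while each coordinate of $z := \Sigma^{1/2}h \sim \mathcal{N}(0,\Sigma)$ is sub-Gaussian with variance at most $q(\Sigma) = \max_i \Sigma_{ii}$. H\"older together with a standard maximal inequality then gives $\mathbb{E}\sup_{\Delta \in K}\langle z, \Delta\rangle \le 2\sqrt{k}\,\mathbb{E}\|z\|_\infty \le C\sqrt{k\, q(\Sigma)\log d}$. Consequently $\mathbb{E} f(W) \ge \gamma\sqrt{t-1} - C\sqrt{k\,q(\Sigma)\log d}$, which exceeds $(\gamma/4)\sqrt{t}$ exactly when $t$ exceeds a constant multiple of $q(\Sigma)\,k\log d/\gamma^2$---the stated sample-size threshold with $c''$ absorbing the numerical constant.

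The final step is to promote this expectation bound to a tail bound via Gaussian Lipschitz concentration. The map $W \mapsto f(W)$ is $L$-Lipschitz in the Frobenius norm with $L = \sup_{\Delta \in K} \|\Sigma^{1/2}\Delta\|_2 \le \sqrt{\lambda_{\max}(\Sigma)}$ (as an infimum of linear functionals), so
$$
\Pr\bigl(f(W) < \mathbb{E} f(W) - s\bigr) \;\le\; \exp\!\bigl(-s^2/(2\lambda_{\max}(\Sigma))\bigr).
$$
Choosing $s = (\gamma/8)\sqrt{t}$ combines with Step~2 to deliver $f(W) \ge (\gamma/8)\sqrt{t}$ with probability at least $1 - c'e^{-ct}$ for positive constants $c,c'$. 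The main obstacle I anticipate is precisely this concentration step: the crude Lipschitz constant $\sqrt{\lambda_{\max}(\Sigma)}$ produces an exponent proportional to $\gamma^2/\lambda_{\max}(\Sigma)$ rather than a true universal constant, and matching the clean $e^{-ct}$ form in the statement requires either folding the ratio $\gamma^2/\lambda_{\max}(\Sigma)$ into the ``universal'' constants or running a peeling argument over dyadic shells $\{2^{j} \le \|\Delta_S\|_2/\|\Delta\|_2 \le 2^{j+1}\}$ of the cone so that a sharper effective Lipschitz constant can be used on each slice before unioning. Once this bookkeeping is handled, the Gordon step and the Gaussian-width calculation themselves are standard.
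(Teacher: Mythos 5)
The paper offers no proof of this lemma at all: it is imported verbatim (specialized to $\alpha=1$) as Corollary~1 of \cite{raskutti2010restricted}, so there is no in-paper argument to compare against. Your sketch is essentially a reconstruction of the original Raskutti--Wainwright--Yu proof: factor $\Phi=\Sigma^{1/2}W$, use the Gordon min--max comparison to lower-bound $\mathbb{E}\inf_{\Delta\in K}\|W^{\top}\Sigma^{1/2}\Delta\|_2$ by $\gamma\sqrt{t-1}$ minus the Gaussian width of $\Sigma^{1/2}K$, bound that width by $C\sqrt{k\,q(\Sigma)\log d}$ via the $\ell_1/\ell_\infty$ H\"older step on the cone, and then concentrate. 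Those steps are all correct, and you have accurately isolated the one genuine obstacle: under the normalization $\|\Delta\|_2=1$ the Lipschitz constant of $f$ is $\sqrt{\lambda_{\max}(\Sigma)}$, so Borell--TIS yields an exponent proportional to $t\gamma^2/\lambda_{\max}(\Sigma)$ rather than a universal $ct$. The resolution used in the cited source is the one you gesture at: renormalize so that $\|\Sigma^{1/2}\Delta\|_2=1$ (making the Lipschitz constant $1$ and the required deviation a universal constant times $\sqrt{t}$), run a peeling argument over the $\ell_1$-radius to get uniformity over the cone, and only at the end invoke the population RE hypothesis to convert $\|\Sigma^{1/2}\Delta\|_2$ back into $\gamma\|\Delta\|_2$; with that substitution your argument closes and delivers the stated $1-c'e^{-ct}$ guarantee. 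One caution directed at the surrounding paper rather than at your proof: the lemma is stated for the cone $\mathcal{C}(S;1)$, whereas Theorem~\ref{theorem:high} applies it to $\Delta\in\mathcal{C}(S;3)$; the identical argument covers $\alpha=3$, but the factor $4=(1+1)^2$ in the sample-size threshold must then be replaced by $(1+3)^2=16$.
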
 \label{lemma:noise}
    \subsection*{Proof of Lemma \ref{lemma:1}}
    	\begin{proof}   
		Since $e_i^t(j)$ is independent of each other, we can analyze it by coordinates. To simplify the analysis, we slightly abuse the notations and remove subscript $i$ and superscript $t$ (only within this proof), that is, $x(j) := x_i^t(j)$ and $e(j) := e_i^t(j)$. 
		\begin{align*}
		    \lambda_{\min} \left(\mathbb{E}\left[xx^{\top}\right]\right) &= \min_{\|w\| = 1} w^{\top}\mathbb{E}[xx^{\top}]w\\
		    &= \min_{\|w\|=1} \mathbb{E}(w^{\top}xx^{\top}w) \\
		    &= \min_{\|w\|=1} \mathbb{E}(\langle w,x\rangle)^2) \\
		    & \geq \min_{\|w\|=1} \mathrm{Var}(\langle w,x\rangle) \\
		    & \geq \min_{\|w\|=1} \mathrm{Var}(\langle w,e\rangle) \\
		    & = \min_{\|w\|=1} \sum_{i=1}^{d} (w(i))^2 \mathrm{Var}(e(i)| \textit{censored}~in~[-q_i,q_i])\\
		    & \geq \min_{\|w\|=1} g(2q/\sigma,0)\sigma_1^2\sum_{i=1}^{d} (w(i))^2 \\
		    & = g(2q/\sigma_1,0)\sigma_1^2,
		\end{align*} 
		where $g(2q/\sigma_1,0)$ is according to Lemma \ref{lemma:6}.
	\end{proof}
	\begin{lemma} \label{lemma:6}
	Let $e \sim \mathcal{N}(0,\sigma_1^2)$. For any interval $[a,b]$ which contains $0$ and fixed length $2q$, e.g., $b-a = 2q$, and $q \geq \sigma_1$, we have the following result:
	\begin{align}
	   \mathrm{Var}(e|\textit{censored}~in~[a,b]) \geq g(2q/\sigma_1,0)\sigma_1^2. 
	\end{align}
	\end{lemma}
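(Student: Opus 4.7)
The plan is to reduce to the standard normal and apply a uniform law-of-total-variance bound that survives any admissible placement of the interval. After rescaling by $\sigma_1$, the task becomes a lower bound on $\mathrm{Var}(\tilde{Z})$, where $\tilde{Z}$ is the censoring of $Z \sim \mathcal{N}(0,1)$ onto an interval $[\alpha,\beta]$ of length $2\rho := 2q/\sigma_1 \geq 2$ containing $0$. The right-hand side $g(2q/\sigma_1,0)\sigma_1^2$ will come out of the construction below; the composite-of-$\phi$-and-$\Phi$ form of $g$ noted in the preamble is consistent with the truncated-normal expression that will appear.

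The steps are as follows. First, scale by $\sigma_1$ to reduce to the standard normal. Second, since $\alpha \leq 0 \leq \beta$ and $\beta - \alpha = 2\rho$, at least one of the sub-intervals $[0,\rho]$ or $[-\rho,0]$ is contained in $[\alpha,\beta]$, and by the $Z \mapsto -Z$ symmetry of the standard normal we may assume WLOG that $[0,\rho] \subseteq [\alpha,\beta]$. Third, take the event $B = \{Z \in [0,\rho]\}$ and apply the decomposition
\[
\mathrm{Var}(\tilde{Z}) = P(B)\,\mathrm{Var}(\tilde{Z}\mid B) + P(B^c)\,\mathrm{Var}(\tilde{Z}\mid B^c) + P(B)P(B^c)\bigl(E[\tilde{Z}\mid B] - E[\tilde{Z}\mid B^c]\bigr)^2,
\]
dropping the two non-negative terms to get $\mathrm{Var}(\tilde{Z}) \geq P(B)\,\mathrm{Var}(\tilde{Z}\mid B)$. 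Fourth, on $B$ one has $Z \in [0,\rho] \subseteq [\alpha,\beta]$, so $\tilde{Z} = Z$ there, and $\mathrm{Var}(\tilde{Z}\mid B) = \mathrm{Var}(Z\mid Z \in [0,\rho])$ is the variance of a one-sided truncated standard normal, expressible in closed form from $\phi(\rho), \Phi(\rho), \phi(0), \Phi(0)$. Fifth, identify the product $(\Phi(\rho) - \tfrac{1}{2})\cdot\mathrm{Var}(Z\mid Z \in [0,\rho])$ with $g(2\rho,0)$ using equation~\eqref{equ:g}.

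The main obstacle I anticipate is bookkeeping rather than anything analytic: verifying that the concrete formula produced by this argument matches whatever explicit expression equation~\eqref{equ:g} uses to define $g(2\rho,0)$. The hypothesis $q \geq \sigma_1$ (equivalently $\rho \geq 1$) enters only to keep $P(Z \in [0,\rho]) = \Phi(\rho) - 1/2$ comfortably bounded away from $0$, so that the resulting constant is useful. The bound is deliberately loose: it throws away all between-block variability and also discards one half of the interval. But this looseness is exactly what makes it hold \emph{uniformly} over every admissible placement of $[a,b]$, which is what Lemma~\ref{lemma:1} actually needs.
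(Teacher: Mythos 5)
Your decomposition and the reduction (rescale, WLOG $[0,\rho]\subseteq[\alpha,\beta]$, law of total variance over $B=\{Z\in[0,\rho]\}$, drop the non-negative terms) are all valid, and they do yield a placement-uniform positive lower bound $\mathrm{Var}(\tilde Z)\ge \bigl(\Phi(\rho)-\tfrac12\bigr)\mathrm{Var}(Z\mid Z\in[0,\rho])$. The genuine gap is your step five: that product is \emph{not} $g(2\rho,0)$, and the mismatch is not bookkeeping. In the paper, $g(\beta,\alpha)$ is defined by equation \eqref{equ:g} to be the \emph{exact} variance of the censored variable, so $g(2\rho,0)$ is the full censored variance on $[0,2\rho]$, including the point mass at $0$ (probability $1/2$), the mass at $2\rho$, and the between-group variance term --- precisely the pieces you discarded, and on the full interval of length $2\rho$ rather than your half-interval $[0,\rho]$. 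Numerically the gap is substantial: as $\rho\to\infty$ your constant tends to $\tfrac12\bigl(1-\tfrac2\pi\bigr)\approx 0.18$ while $g(2\rho,0)\to \mathrm{Var}(\max(Z,0))=\tfrac12-\tfrac{1}{2\pi}\approx 0.34$; at $\rho=1$ your constant is about $0.03$ versus $g(2,0)\approx 0.31$. So your argument proves a strictly weaker inequality and cannot be "identified" with the stated bound.

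The paper's route is different in kind: it computes the censored variance exactly via the law of total variance (conditioning on $e\in[a,b]$ versus its complement), obtaining the closed form $g(\beta,\alpha)\sigma_1^2$, and then minimizes this expression over all placements of the fixed-length interval containing $0$, arguing (by a first-order optimality condition and monotonicity in the right endpoint) that the minimum is attained at the one-sided placement $[0,2q]$, which \emph{is} $g(2q/\sigma_1,0)\sigma_1^2$ by definition. Your looser bound would still suffice for the downstream use in Lemma \ref{lemma:1} (perturbed diversity only needs some positive constant $\lambda_0$ independent of the adversary's placement), but to prove Lemma \ref{lemma:6} as stated you must either carry out the exact-formula-plus-optimization argument, or restate the conclusion with your smaller constant $\bigl(\Phi(q/\sigma_1)-\tfrac12\bigr)\mathrm{Var}\bigl(e\mid e\in[0,q]\bigr)$ in place of $g(2q/\sigma_1,0)\sigma_1^2$.
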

	\begin{proof}
	    We first derive the variance for two sided censored Gaussian Distribution. Denote $\alpha = a/\sigma_1$ and $\beta = b/\sigma_1$. For the truncated Gaussian distribution, we have
	    \begin{align*}
	        \mathbb{E}(e|e \in [a,b]) &= \sigma_1 \frac{\phi(\alpha) - \phi(\beta)}{\Phi(\alpha) - \Phi(\beta)} = \sigma_1\rho.\\
	        \mathrm{Var}(e|e \in [a,b]) &= \sigma_1^2(1+\underbrace{\frac{\alpha\phi(\alpha) - \beta\phi(\beta)}{\Phi(\alpha) - \Phi(\beta)} - \rho^2}_{\Lambda}).
	    \end{align*}
	    Then we calculate the variance of two sided censored Gaussian distribution by
	    \begin{align*}
	    \mathrm{Var}(e|\textit{censored}~in~[a,b]) = \mathbb{E}_{y}[ \mathrm{Var}(e|y)] + \mathrm{Var}_{y}[ \mathbb{E}(e|y)],   
	    \end{align*}
	    where $y$ denotes the event $e \in [a,b]$. After some basic calculations, we can get the following result:
	    \begin{align}
	       \mathrm{Var}(e|\textit{censored}~in~[a,b]) &= \sigma_1^2(\Phi(\beta)- \Phi(\alpha))(1+\Lambda) \nonumber\\
	       &~~~~~+ \sigma_1^2[(\rho - \beta)^2(\Phi(\beta)- \Phi(\alpha))(1-\Phi(\beta)+ \Phi(\alpha)) \nonumber\\
	       &~~~~~+2(\beta - \alpha)(\rho-\beta)(\Phi(\beta)- \Phi(\alpha))\Phi(\alpha) \nonumber\\
	       &~~~~~+ (\beta - \alpha)^2(1-\Phi(\alpha))\Phi(\alpha)] \nonumber\\
	       &= g(\beta,\alpha)\sigma_1^2 \label{equ:g} .
	    \end{align}
	    One can show that (1) $\mathrm{Var}(e|\textit{censored}~in~[a,b])$ achieves minimum  when $a=0$ or $b =0$ by the first order optimality condition. (2) $\mathrm{Var}(e|\textit{censored}~in~[0,b])$ is an increasing function w.r.t $b$. Based on (1) and (2), we obtain  
	    \begin{align*}
	        \mathrm{Var}(e|\textit{censored}~in~[a,b]) \geq \mathrm{Var}(e|\textit{censored} \in [0,2q]) = g(2q/\sigma_1,0)\sigma_1^2, 
	    \end{align*}
	\end{proof}
	\subsection*{Proof of Lemma \ref{lemma:eig}}
		\begin{proof}
	    At round $t$, we have
	    \begin{align*}
	      &~~~~\lambda_{\min}(\mathbb{E}(XX^{\top}))\\ 
	      &= \lambda_{\min}(\mathbb{E}(\sum_{i=1}^{t}(x_{a_i}^i (x_{a_i}^{i})^{\top})) = \lambda_{\min}(\sum_{i=1}^{t} \mathbb{E}(x_{a_i}^i (x_{a_i}^{i})^{\top})) \\
	      &\geq \sum_{i=1}^{t} \lambda_{\min}( \mathbb{E}(x_{a_i}^i (x_{a_i}^{i})^{\top})),
	    \end{align*}
	    where the second equality is due to the independence of each round's perturbation and the inequality comes from the fact that minimum eigenvalue is an super-additive operator.
	    
	    For the censored Gaussian perturbation, $\lambda_{\min}( \mathbb{E}(x_{a_i}^{i}(x_{a_i}^{i})^{\top})) \geq g(\frac{2q}{\sigma_1},0))\sigma_1^2$ based on Lemma \ref{lemma:1}. So $\lambda_{\min}(\mathbb{E}(XX^{\top})) \geq g(\frac{2q}{\sigma_1},0)\sigma_1^2 t$.
	    
	    Based on \eqref{eq:trop} of Lemma \ref{lemma:trop} and $\lambda_{\max}(x^i_{a_i}(x^i_{a_i})^{\top}) \leq \|x^i_{a_i}\|_2^2 \leq R^2$, one can obtain
	    \begin{align*}
	          \mathbb{P}\left\{\lambda_{\min}(X X^{\top}) \leq g( \frac{2q}{\sigma_1},0)(1-\tau)\sigma_1^2t \right\} \leq d\left[
        e^{-\tau^2/2}
        \right]^{\frac{g(2R/\sigma_1,0)\sigma_1^2t}{R^2}}.   
	    \end{align*}
	    Let $\frac{1}{T} = d\left[
        e^{-\tau^2/2}
        \right]^{\frac{g(2R/\sigma_1,0)\sigma_1^2t}{R^2}}$ and one can get the final result.
	\end{proof}
	\subsection*{Proof of Theorem 1}
	  \begin{proof}
    To simplify the analysis, we slightly abuse the notation and denote the unperturbed context matrix by $\mu$ where each column $\mu_i$ is one context vector. Similarly, denote $e$ to be the perturbation matrix and $e_i$ to be the column vector. We first decompose the $\Delta^{\top} XX^{\top} \Delta$ as follows:
    	\begin{align}
	   \Delta^{\top} XX^{\top} \Delta = \underbrace{\Delta^{\top} \mu\mu^{\top} \Delta}_{\textbf{(a)}}  + 2\underbrace{\Delta^{\top}e\mu^{\top}\Delta}_{\textbf{(b)}} + \underbrace{\Delta^{\top}ee^{\top}\Delta}_{\textbf{(c)}}.  \label{eq:total} 
	\end{align}
	For the term $\textbf{(a)}$ in equation \eqref{eq:total}, one can only show $\textbf{(a)}$ since $\Delta$ could lie in $\textbf{Null}(\mu^{\top})$. For term \textbf{(b)} and \textbf{(c)}, we find both terms high probability lower bounds respectively.  
	
	Now consider a positive definite matrix $\Sigma$ and we can design that $\Sigma$ such that it satisfies the RE, that is, $\|\Sigma^{1/2}\Delta\|_2 \geq \gamma \|\Delta\|_2$. Based on Lemma \ref{lemma:noise}, we can derive the following for term \textbf{(c)}. For universal positive constants $c,c',c''$, if the sample size satisfies
    \begin{align}
    t > \frac{4c'' q(\Sigma)}{\gamma^2} k \log d,
    \end{align}
    where $q(\Sigma) = \max_{i} \Sigma_{ii},$
    then with probability at least $1-\frac{c'}{e^{ct}}$
    \begin{align}
    \Delta^{\top}ee^{\top}\Delta \geq \frac{\gamma^2}{64}t\|\Delta\|_2^2.\label{equ:b}
    \end{align}
    We then derive a high probability bound for $\textbf{(b)}$. First, we decompose \textbf{(b)} into a weighted sum of i.i.d. Gaussian variable. That is,
    \begin{align}
        \Delta^{\top} e\mu^{\top} \Delta = \sum_{i=1}^{t}(\mu_i^{\top}\Delta)(\Delta^{\top }e_i),
    \end{align}
    where $\mu_i^{T}\Delta$ is the weight and each $\Delta^{\top}e_i \sim \mathcal{N}(0,\Delta^{\top}\Sigma\Delta)$. Based on the Chernoff Bound of weighted sum of sub-Gaussian random variables in Fact \ref{fact:2}, we have
    \begin{align}
        \sum_{i=1}^{t}(\mu_i^{\top}\Delta)(\Delta^{\top }e_i) &\geq - \sqrt{2a\Delta^{\top} \Sigma \Delta \sum_{i=1}^{t}(\mu_i^{\top}\Delta)^2 \log t} \\
        &\geq  -\sqrt{2a\lambda_{\max}(\Sigma)\|\Delta\|_2^2 \sum_{i=1}^{t}R^2\|\Delta\|^2_2 \log t}\\
        & = -Rt\|\Delta\|_2^2\sqrt{\frac{2a\lambda_{\max}(\Sigma) \log t}{t}}. \label{equ:a}
    \end{align} 
    with probability at least $1-\frac{1}{t^{a}}$. We can conclude with probability at least $1 - (\frac{c'}{e^{ct}} +\frac{1}{t^a}),$ both inequality \eqref{equ:b} and \eqref{equ:a} hold. If the round $t$ satisfies
    \begin{align}\label{equ:explore}
    t > \max\left(\underbrace{\frac{4c'' q(\Sigma)}{\gamma^2} k \log d}_{\textbf{d}},~\underbrace{\frac{8196aR^2 \lambda_{\max}(\Sigma) \log t}{\gamma^4}}_{\textbf{e}}\right).
    \end{align}, we have $\textbf{(b)} + \textbf{(c)} \geq h t\|\Delta\|_2^2$, where $h = \left(\frac{\gamma^2}{64} - R\sqrt{\frac{2a\lambda_{\max}(\Sigma) \log t}{t}}\right).$
    \end{proof}
	\subsection*{Proof of Lemma \ref{lemma:recovery}}
	\begin{proof}
		Our proof combines the techniques from smoothed analysis and Lasso regression. Since $\theta^t$ minimizes $G(\theta)$, we have $G(\theta^t) \leq G(\theta^*)$. This yields the following inequality
		\begin{align*}
		    \|X^{\top}\Delta^t\|_2^2 \leq \Delta^t X\eta + \lambda^t(\|\theta^*\|_1 - \|\theta^* + \Delta^t\|_1),
		\end{align*}
		where $\eta$ denotes the noise vector. Note that $\|\theta^*\|_1 = \|\theta^*_S\|$. Furthermore, one can verify that
		$\|\theta^*\|_1 - \|\theta^* + \Delta^t\|_1 \leq \|\Delta^t_S\|_1 -\|\Delta^t_{S^c}\|_1$.
		For $\Delta^t X\eta$, applying H\"o lder's inequality yields
		\begin{align*}
		 \Delta^t X\eta \leq \|\Delta^t\|_1\|X\eta\|_{\infty} \leq \sigma R \sqrt{2t\log \frac{2d}{\delta}} \|\Delta^t\|_1 = \frac{\lambda^t}{2}\|\Delta^t\|_1,  
		\end{align*}
		where the second inequality is due to the fact \ref{fact:1}.
		Combine all above and we obtain
		\begin{align}
		    \|X^{\top}\Delta^t\|_2^2 &\leq \frac{\lambda^t}{2}\|\Delta^t\|_1 + \lambda^t(\|\Delta^t_{S}\|_1 - \|\Delta^t_{S^c}\|_1) \label{eq:set}\\
		    &\leq \frac{3}{2}\lambda^t\|\Delta^t\|_1 \leq \frac{3}{2}\lambda^t\sqrt{k}\|\Delta^t\|_2. \label{eq:1}
		\end{align}
		First from inequality \eqref{eq:set}, we can obtain $\Delta^t \in \mathcal{C}(S;3)$. For low dimensional case, we have $\|X^{\top}\Delta^t\|_2^2 \geq \lambda_{\min}(XX^{\top})\|\Delta^t\|_2^2 \geq  Ct\|\Delta^t\|_2^2$ by Lemma \ref{lemma:eig}, where $C= g\left(\frac{2q}{\sigma_1},0\right)(1-\tau)\sigma_1^2.$ For high dimensional case, we apply Theorem $\ref{theorem:high}$ since $\Delta^t \in \mathcal{C}(S;3)$ and get $\|X^{\top}\Delta^t\|_2^2 \geq Ct\|\Delta^t\|_2^2$ where $C = \frac{\gamma^2}{64} - R\sqrt{\frac{2a\lambda_{\max}(\Sigma) \log T}{t}}.$
		Combine these with inequality \eqref{eq:1} and we get the final result
		\begin{align*}
		    \|\Delta^t\|_2 \leq \frac{3\sigma R}{C}\sqrt{\frac{2k\log 2d/\delta}{t}}.
		\end{align*}
	\end{proof}
		\subsection*{Proof of Theorem \ref{theorem:regret}}
	\begin{proof}
		As for the regret in round $t$, we have
		\begin{align*}
		&~~~~\langle x_{i_t^*}^t,\theta^{*}\rangle -\langle x_{i_t}^t,\theta^{*}\rangle \\
		&= \langle x_{i^*_t}^t,\theta^{*}-\theta^t\rangle - \langle x_{i_t}^t,\theta^{*}-\theta^t\rangle + \langle x_{i_t^*}^t,\theta^{t}\rangle-\langle x_{i_t}^t,\theta^t\rangle\\
		&\leq \langle x_{i_t^*}^t,\theta^{*}-\theta^t\rangle - \langle x_{i_t}^t,\theta^{*}-\theta^t\rangle \\
		&\leq \|\langle x_{i_t^*}^t,\theta^{*}-\theta^t\rangle\|_2 + \|\langle x_{i_t}^t,\theta^{*}-\theta^t\rangle\|_2 \\
		&\leq 2R\|\theta^{*}-\theta^t\|_2,
		\end{align*}
		where the first inequality comes from the greedy choice since $i_t = \arg \max_i \langle x_i^t,\theta^t\rangle$ and the last inequality is due to the censored perturbations. Based on the analysis of low and high dimensional cases, we denote the exploration length as $T_{e}$. During the exploration, we can bound the regret by $2RT_e$.
		So we can derive that
		\begin{align*}
		Regret &= \sum_{t=1}^{T_e} \langle x_{i^*}^t,\theta^{*}\rangle -\langle x_{i_t}^t,\theta^{*}\rangle + \sum_{t=T_e+1}^{T} \langle x_{i^*}^t,\theta^{*}\rangle -\langle x_{i_t}^t,\theta^{*}\rangle \\
		&\leq 2RT_e + 2R\sum_{t=T_e+1}^{T}\|\theta^{*}-\theta^t\|_2 \\
	    & \leq 2RT_e + 2R\sum_{t=T_e+1}^{T}\frac{3\sigma R}{C}\sqrt{\frac{2k\log 2d/\delta}{t}}\\
	    & \leq 2R\left( T_e +\frac{6\sigma R}{C}\sqrt{2kT\frac{\log 2d}{\delta}} \right)
		\end{align*}
		\end{proof}
	\subsection*{Numeric Simulations}
	This section shows the result of numeric simulations. We choose the context's dimension $d = 2000$ where effective dimension $k=20$ and 5 arms for each round. Our sparse bandit learning process only contains 150 rounds with each context vector are randomly generated from the uniform distribution $[0,1]$. Each experiment are repeated 10 times to reduce the effect of the other unnecessary factors. Solid line denotes the average performance and the shadow area contains best and worst performance during repeated running. 
		\begin{figure}[ht!]
	\centering
	\includegraphics[width=0.85\columnwidth]{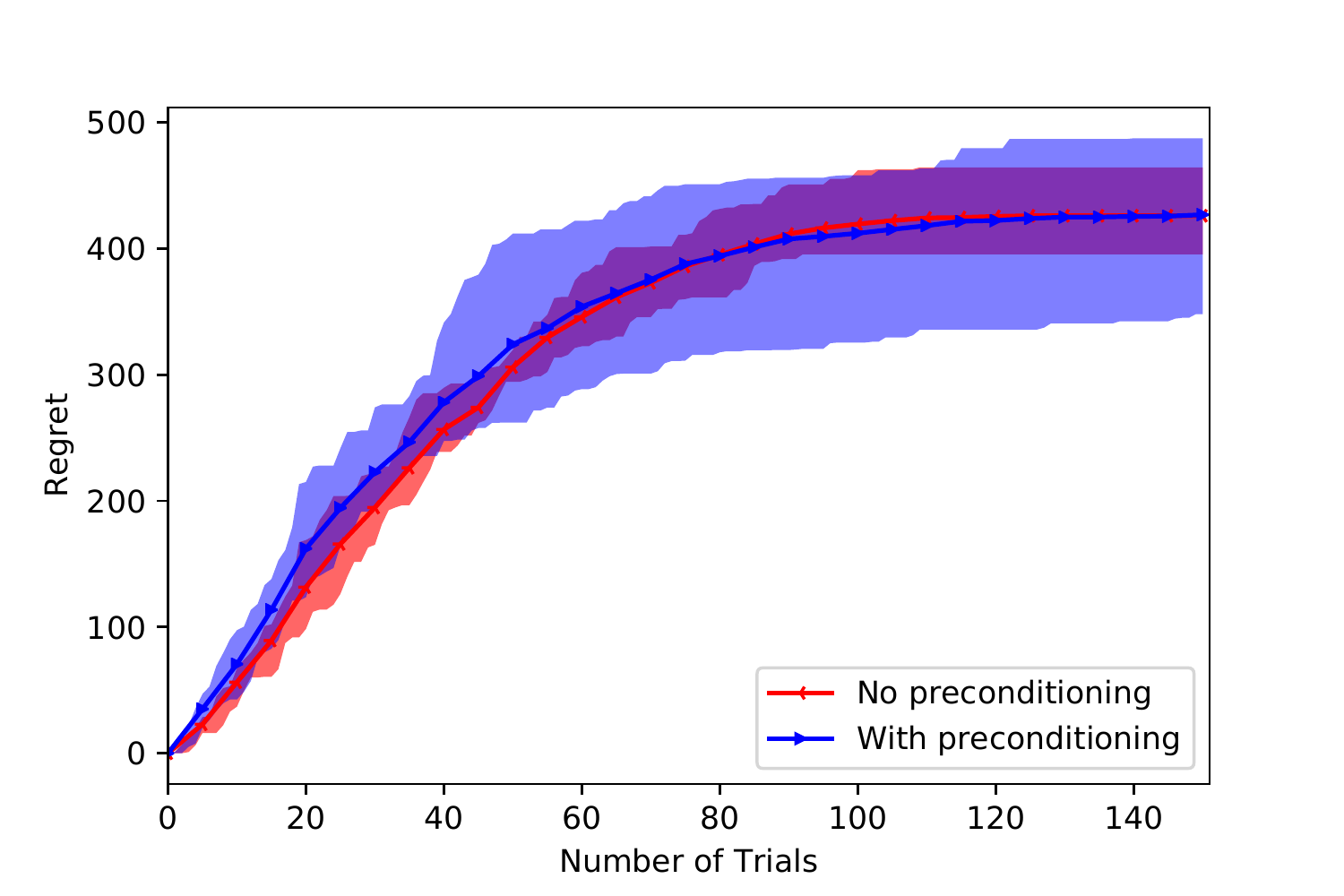}
	\caption{Regret of preconditioning and no preconditioning setting with perturbation variance $0.1$. 
	}\label{fig:1} 
\end{figure}
	We first compare the preconditioning via SVD from the algorithm of  \cite{sivakumar2020structured} which transfers all non-zero singular eigenvalues to 1. Figure \ref{fig:1} shows the regret results with and without preconditioning. In our experiments, we find that the average performance of preconditioning is almost the same as the one without preconditioning (see solid line). Moreover, the performance of preconditioning shows more unstable (see the shadow area). Also, for $d = 2000,$ preconditioning heavily slows down the learning process. The reason could be the noise amplification incurred by preconditioning where \cite{jia2015preconditioning} shows (1) the preconditioned noise are no longer i.i.d. (2) preconditioning can amplify the noise.
	\begin{figure}[ht!]
	\centering
	\includegraphics[width=0.85\columnwidth]{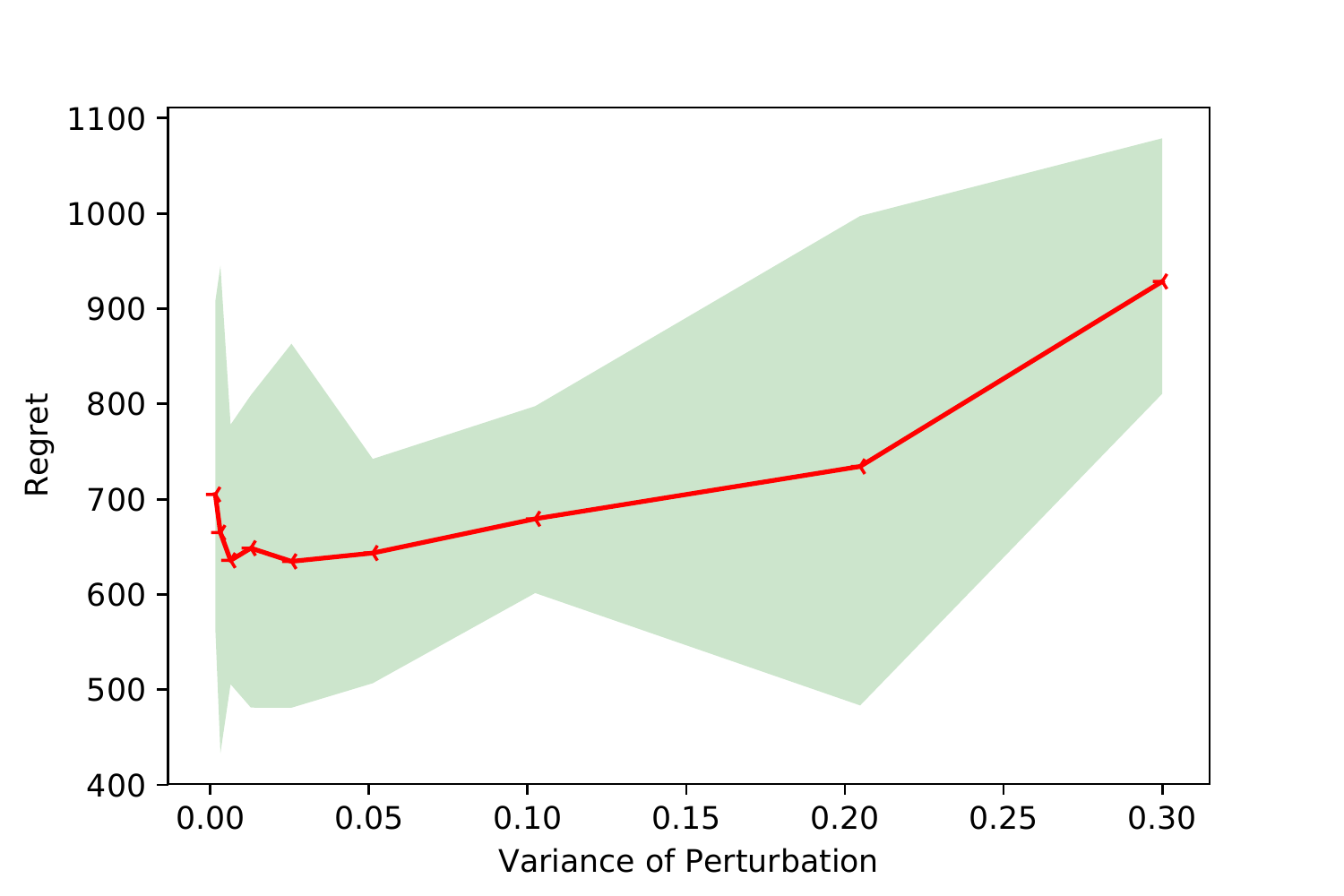}
	\caption{Regret under different perturbation variance.
	}\label{fig:2} 
\end{figure}

We then investigate the performance under different perturbation variance. The result in Figure \ref{fig:2} shows the regret will first decrease then increase which is expected by our analysis. The first decreasing phase is because that the perturbation brings good property to the context matrix. When the perturbation variance becomes large, the context's variance also becomes large which leads to more explorations under the uncertainty environment.  
\end{document}